\colorlet{shadecolor}{yellow}
\newtheorem{mydef}{Definition}
\newtheorem{myth}{Theorem}
\newcommand{\myreal}{\ensuremath{\mathbb{R}}}
\newcommand{\mysob}{\ensuremath{W^{m,\infty}(I^n)}}
\newcommand{\mynorm}[1]{\ensuremath{||{#1}||_\infty}}
\title{Some Approximation Bounds for Deep Networks}
\author{\Name{Brendan McCane} \Email{mccane@cs.otago.ac.nz}
  \\ \Name{Lech Szymanski} \Email{lech@cs.otago.ac.nz}\\
  \addr Dept of Computer Science, University of Otago, Dunedin, New Zealand}
\begin{document}
\maketitle

\begin{abstract}
In this paper we introduce new bounds on the approximation of functions in deep networks and in doing so introduce some new deep network architectures for function approximation. These results give some theoretical insight into the success of autoencoders and ResNets.
\end{abstract}

\begin{keywords}
Deep Networks; Function Approximation.
\end{keywords}
\section{Introduction}

Deep networks have been shown to be more efficient than shallow for certain classes of problems: periodic functions \citep{szymanski2014deep}; radially symmetric functions \citep{eldan2016power}; and hierarchial compositional functions \citep{mhaskar2016deep}. Other work has shown that deep networks can efficiently represent low-dimensional manifolds \citep{basri2016efficient,shaham2016provable}, and \citet{telgarsky2016benefits} shows that there exist functions which cannot be efficiently represented with shallow networks.

All is not lost for shallow networks however. \citet{mhaskar1996optimal} gives bounds for the approximation of Sobolev functions using shallow networks and shows that these bounds are tight. There appear to be no similar bounds for deep networks on this class of functions. One might naturally ask if deep networks can approximate this class of functions with similar bounds, or if shallow networks are demonstrably superior in this case. If the former, then one may conclude that a deep network is never worse than a shallow counterpart and hence we should always favour deep networks. If the latter, then choosing a shallow network may often be a good choice. This paper goes some way to answering this question by establishing upper bounds of approximation on some specific deep network architectures.

\section{Definitions}

We follow many of the conventions of \citet{mhaskar2016deep}.

\begin{mydef}[$V_N$]
The set of all networks of a given kind with complexity N (the number of units in the network).
\end{mydef}

\begin{mydef}[$\mynorm{f}$] Norm of a function.
Let $I^n = [-1,1]^n$ be the unit cube in $n$ dimensions. Let $\mathbb{X}=C(I^n)$ be the space of all continuous functions on $I^n$ with:
\begin{equation}
\mynorm{f} = \max_{x \in I^n} | f(x) |.
\end{equation}
\end{mydef}

\begin{mydef}[Degree of approximation]
If $f$ is the unknown function to be approximated, then the distance between $f$ and an approximating network is:
\begin{equation}
\text{dist}(f, V_N) = \inf_{P \in V_N} \mynorm{f - P}.
\end{equation}
\end{mydef}

\begin{mydef}[\mysob] A Sobolev space.
Let $m \ge 1$ be an integer. Let \mysob be the set of all functions of $n$ variables with continuous partial derivatives of orders up to $m < \infty$ such that:
\begin{equation}
\mynorm{f} + \sum_{1 \le |k|_1 \le m} \mynorm{\mathcal{D}^k} \le 1,
\end{equation} 
where $\mathcal{D}^k$ denotes the partial derivative indicated by the multi-integer $k \ge 1$, and $|k|_1$ is the sum of the components of $k$.
\end{mydef}

\begin{mydef}[$\sigma$] The transfer function.
Let $\sigma: \myreal \mapsto \myreal$ be infinitely differentiable, and not a polynomial on any subinterval of $\myreal$. Further, we restrict ourselves to $\sigma \in \mysob$. Many common smooth transfer functions satisfy these conditions including the logistic function, $\tanh$, and softplus.
\end{mydef}

\begin{mydef}[$S_{N,n}$] The class of all shallow networks with $N$ units and $n$ inputs.
Let $S_{N,n}$ denote the class of shallow networks with $N$ units of the form:
\begin{equation}
x \mapsto \sum_{k=1}^N a_k \sigma(\langle w_k, x \rangle + b_k)
\end{equation}
where $w_k \in \myreal^n$, $a_k, b_k \in \myreal$. The number of trainable parameters in such a network is $(n+2) N$. Since $\sigma \in \mysob$, it should be obvious that $S_{N,n} \in \mysob$.
\end{mydef}

In all the deep networks we consider, one neuron in each layer after the first hidden layer is identified as the function approximation neuron. This allows us to progressively approximate the function of interest.

\begin{mydef}[$R_{N,n,l}$] The class of input residual networks with $N$ units per layer, 
where $l$ is the number of layers, and $g_{a,b,n} \in S_{a,n}$ is the $b^{th}$ layer of the network. The first hidden layer of the network receives just the input coordinates. Each subsequent layer has the input coordinates and all the previous layer as input. See Figure \ref{fig-input-resnet}.
\end{mydef}

\begin{mydef}[$D_{N_l,n,l}$] The class of cascade residual networks with $N_l$ units per layer,
where $l$ is the number of layers, and $g_{a,b,n} \in S_{a,n}$ is the $b^{th}$ layer of the network. The first hidden layer of the network receives just the input coordinates. Each subsequent layer has the input coordinates and the function approximation neuron of the previous layer as input. See Figure \ref{fig-cascade-resnet}.
\end{mydef}

\begin{mydef}[$L_{N_l^{\ge},n,l}$] The class of fully connected layer networks with $N_l\ge n$ units per layer. 
Each layer is fully connected to the next layer except for the function approximation neuron which is connected to the function approximation neuron and final output neuron only. In this case the number of neurons in each layer exceeds the input dimension. See Figure \ref{fig-fc-layernet}.
\end{mydef}

\begin{mydef}[$L_{N_l^{<},n,l}$] The class of fully connected layer networks with $N_l < n$ units per layer. 
See Figure \ref{fig-fc-layernet}.
\end{mydef}


\begin{figure}[ht]
\centering
\resizebox{\textwidth}{!}{
\includegraphics{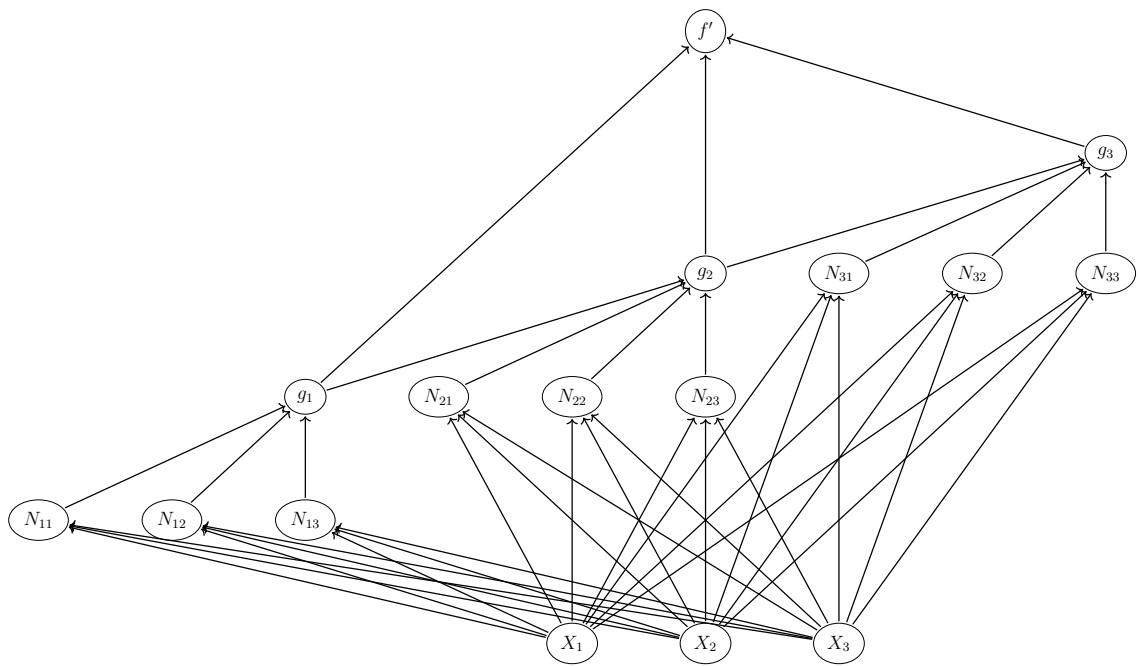}}
\caption{A CascadeResNet \label{fig-cascade-resnet}. $X_i$ are input nodes, $N_{ij}$ are nodes in layer $i$, and $g_i$ are the approximation outputs for layer $i$.}
\end{figure}

\begin{figure}[ht]
\centering
\resizebox{\textwidth}{!}{
\includegraphics{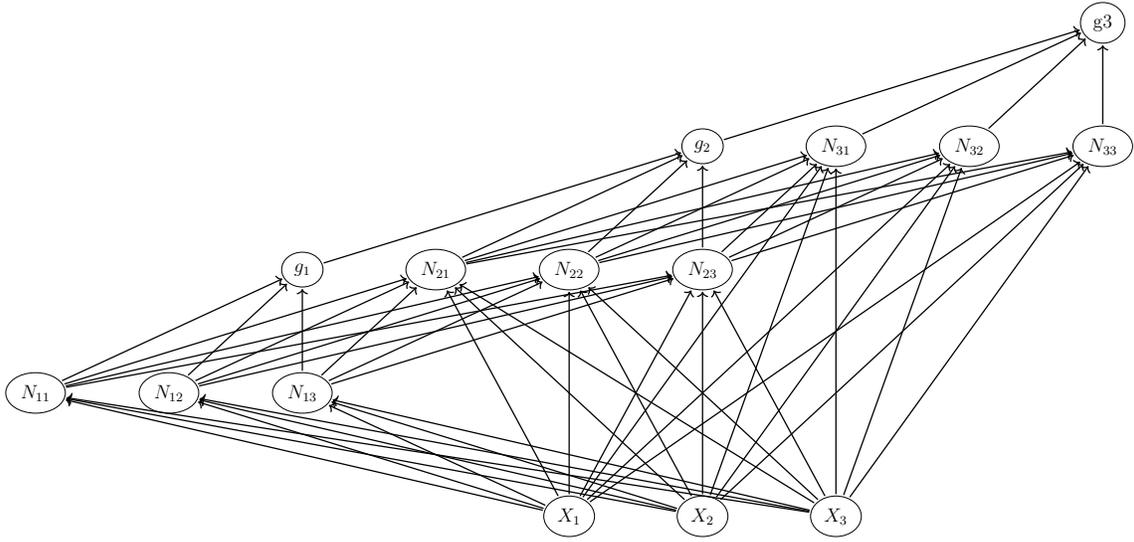}}
\caption{An InputResNet \label{fig-input-resnet}. $X_i$ are input nodes, $N_{ij}$ are nodes in layer $i$, and $g_i$ are the approximation outputs for layer $i$.}
\end{figure}

\begin{figure}[ht]
\centering
\resizebox{!}{0.5\textheight}{
\includegraphics{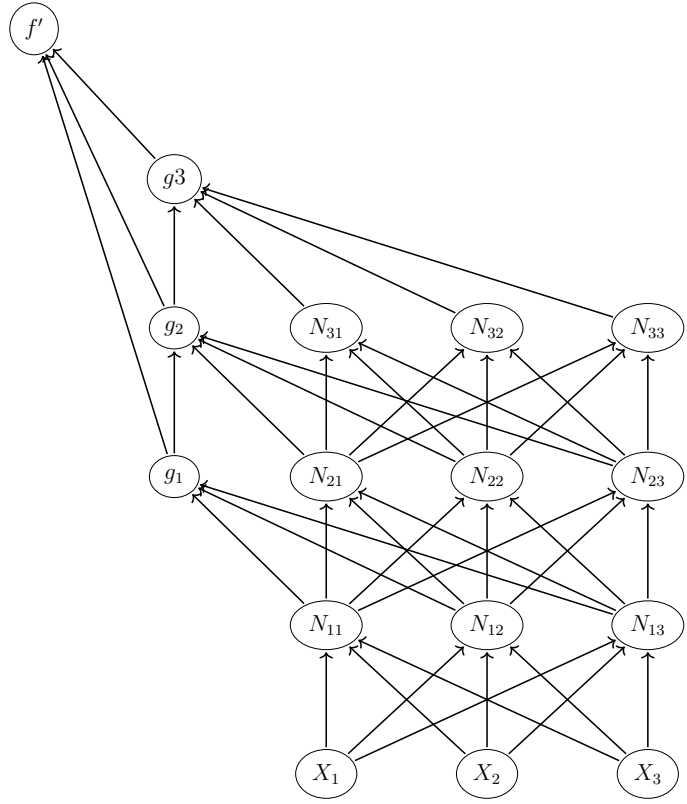}}
\caption{A FullyConnectedLayerNet \label{fig-fc-layernet}. $X_i$ are input nodes, $N_{ij}$ are nodes in layer $i$, and $g_i$ are the approximation outputs for layer $i$.}
\end{figure}


\section{Approximation Bounds}

We take as our starting point Theorem 2.1 of \citet{mhaskar1996optimal} also reported as Theorem 2.1(a) in \citet{mhaskar2016deep} and reproduce it here:

\begin{myth}[Theorem 2.1(a) of \citet{mhaskar2016deep}]
\label{th-shallow}
Let $\sigma: \myreal \mapsto \myreal$ be infinitely differentiable, and not a polynomial on any subinterval of $\myreal$. For $f \in \mysob$:
\begin{equation}
\text{dist}(f, S_{N,n}) \le c N^{-m/n},
\end{equation}
for some constant $c$.
\end{myth}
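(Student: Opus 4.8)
The plan is to split the approximation into two independent stages and then balance them: first approximate the target $f$ by an algebraic polynomial of a suitable total degree $d$, and then approximate that polynomial arbitrarily well by a shallow network in $S_{N,n}$, choosing $d$ so that the cost $N$ of the second stage matches the accuracy $d^{-m}$ of the first. Because the activation $\sigma$ is fixed but essentially unrestricted (smooth, non-polynomial), the natural intermediary is the space of polynomials, for which both classical approximation rates and network-realizability arguments are available.

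For the first stage I would invoke the classical Jackson-type estimate for multivariate polynomial approximation: for $f \in \mysob$ there exists a polynomial $P_d$ of total degree at most $d$ with $\mynorm{f - P_d} \le c_1 d^{-m}$, where $c_1$ depends only on $m$ and $n$. This is standard and I would cite rather than reprove it, since the Sobolev constraint $\mynorm{f} + \sum_{1 \le |k|_1 \le m}\mynorm{\mathcal{D}^k} \le 1$ supplies exactly the smoothness needed for the $d^{-m}$ rate.

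The substantive step is to show that every polynomial of degree at most $d$ in $n$ variables is a uniform limit on $I^n$ of shallow networks whose number of units is proportional to $\dim \Pi_d^n = \binom{n+d}{n}$. Here I would use the hypothesis that $\sigma$ is infinitely differentiable and not a polynomial, which guarantees a point $b_0$ with $\sigma^{(k)}(b_0) \ne 0$ for every $k$. Fixing a direction $\myvec{w}$ and letting $\phi(t) = \sigma(t\langle \myvec{w}, x\rangle + b_0)$, the $k$-th order finite difference $h^{-k}\sum_{i=0}^{k}(-1)^{k-i}\binom{k}{i}\phi(ih)$ is a linear combination of $k+1$ genuine network units and converges, as $h \to 0$, to $\phi^{(k)}(0) = \langle \myvec{w}, x\rangle^k\,\sigma^{(k)}(b_0)$. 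Dividing by the nonzero factor $\sigma^{(k)}(b_0)$ recovers the ridge monomial $\langle \myvec{w}, x\rangle^k$ to arbitrary sup-norm accuracy on the compact cube. Since such ridge monomials of degree at most $d$ span $\Pi_d^n$, any $P_d$ is realizable in the limit by a network with $O\!\left(\binom{n+d}{n}\right)$ units.

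Combining the two stages, I would choose $d$ as the largest integer with $\binom{n+d}{n} \le N$, so that $d$ is of order $N^{1/n}$, and bound $\text{dist}(f, S_{N,n}) \le \mynorm{f - P_d} + \inf_{P \in S_{N,n}}\mynorm{P_d - P}$; the second term is zero in the limit by the construction above, yielding $\text{dist}(f, S_{N,n}) \le c_1 d^{-m} \le c\, N^{-m/n}$. I expect the main obstacle to be the ridge-monomial extraction together with the accompanying dimension bookkeeping: verifying that the divided-difference approximation converges uniformly on $I^n$ while the auxiliary weights remain finite, and controlling the unit count so that approximating all of $\Pi_d^n$ costs only $O(d^n)$ neurons rather than more. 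Once this realizability lemma is established, the Jackson estimate and the final balancing of $d$ against $N$ are routine.
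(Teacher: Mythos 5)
Your proposal cannot diverge from ``the paper's own proof'' because the paper offers none: Theorem \ref{th-shallow} is imported verbatim by citation from \citet{mhaskar1996optimal} (restated in \citet{mhaskar2016deep}), and serves only as the black-box starting point for Theorems \ref{thm-crna}--\ref{thm-fclna-lt}. What you have written is essentially a reconstruction of the standard proof in the cited source: a Jackson-type estimate $\mynorm{f-P_d}\le c_1 d^{-m}$ for $f\in\mysob$, realization of $\Pi_d^n$ by networks via divided differences of $\sigma$ at a point $b_0$ with $\sigma^{(k)}(b_0)\ne 0$ for all $k$ (which exists by a Baire-category argument precisely because $\sigma$ is smooth and not a polynomial on any subinterval), and the balancing $d\sim N^{1/n}$. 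Note also that the degenerate limit $h\to 0$ is harmless here because $\text{dist}(f,S_{N,n})$ is an infimum, so you never need exact realization of $P_d$, only approximation to every $\epsilon>0$.

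The one substantive obstacle you flag --- keeping the unit count at $O(d^n)$ rather than $O(d^{n+1})$ --- is real if you spend $k+1$ fresh units on each of the $\binom{n+d}{n}$ ridge monomials, but it resolves with a reuse argument: for a fixed direction $w_j$, the $d+1$ units $\sigma(ih\langle w_j,x\rangle+b_0)$, $i=0,\dots,d$, simultaneously yield (in the span, as $h\to 0$) \emph{all} powers $\langle w_j,x\rangle^k$ for $k\le d$, since the $k$-th order difference uses only the nodes $0,h,\dots,kh$. Choosing $\binom{n+d-1}{n-1}$ generic directions, whose degree-$k$ ridge powers span the homogeneous polynomials of each degree $k\le d$, gives a total of $(d+1)\binom{n+d-1}{n-1}=O\left(\binom{n+d}{n}\right)$ units, and the rest of your balancing goes through. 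So your argument is correct in outline and, modulo this bookkeeping lemma, is the same route as the literature proof the paper leans on; the only inaccuracy is treating it as the paper's result to reprove rather than a quoted prerequisite.
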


We use the results of Theorem \ref{th-shallow} to derive bounds for the deep network architectures defined above in the next three theorems.

\begin{myth}[Cascade Residual Network Approximation]
\label{thm-crna}
Let $\sigma: \myreal \mapsto \myreal$ be infinitely differentiable, and not a polynomial on any subinterval of $\myreal$. For $f \in \mysob$, and some constant $c$:
\begin{align}
\text{dist}(f, R_{N,n,l}) &\le c^l N_l^{-m(ln+1)/(n(n+1))}
\end{align}
\end{myth}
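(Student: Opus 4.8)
The plan is to argue by induction on the number of layers $l$, using Theorem~\ref{th-shallow} both as the base case and as the engine driving each inductive step. For $l=1$ the network $R_{N,n,1}$ is just a single hidden layer, i.e.\ an element of $S_{N,n}$, so Theorem~\ref{th-shallow} gives $\text{dist}(f,R_{N,n,1}) \le cN^{-m/n}$, which is exactly the claimed bound since $m(1\cdot n+1)/(n(n+1)) = m/n$. The target exponent rewrites as
\[
\frac{m(ln+1)}{n(n+1)} = \frac{m}{n} + (l-1)\,\frac{m}{n+1},
\]
so the whole theorem reduces to showing that each layer beyond the first improves the exponent by exactly $m/(n+1)$ while contributing one extra factor of $c$. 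In multiplicative terms, I would aim to prove a per-layer estimate $\epsilon_l \le c\,N^{-m/(n+1)}\,\epsilon_{l-1}$, where $\epsilon_l$ denotes the approximation error of the function-approximation neuron in layer $l$; unrolling this recursion from the base case immediately produces $c^l N^{-m(ln+1)/(n(n+1))}$.

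For the inductive step, suppose layer $l-1$ has produced a function-approximation neuron $g_{l-1}$ with $\mynorm{f-g_{l-1}} \le \epsilon_{l-1}$. The residual $r_{l-1} = f - g_{l-1}$ is what layer $l$ must capture, and I would build $g_l = g_{l-1} + h_l$ where $h_l$ is a shallow subnetwork approximating $r_{l-1}$. The essential structural feature of $R_{N,n,l}$ is that layer $l$ sees, in addition to the $n$ original coordinates, the outputs of layer $l-1$ — in particular the neuron $g_{l-1}$. I therefore treat $h_l$ as a shallow network on the $n+1$ inputs $(x, g_{l-1}(x))$, so that the relevant instance of Theorem~\ref{th-shallow} is the one in input dimension $n+1$, supplying the exponent $m/(n+1)$ rather than $m/n$. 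This is precisely where the $n+1$ in the denominator of the stated rate originates.

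The main obstacle — and the step I expect to be genuinely delicate — is converting ``$r_{l-1}$ is small'' into an approximation bound that is \emph{relative} to $\epsilon_{l-1}$, so that the errors compound multiplicatively. Theorem~\ref{th-shallow} applies to functions in the unit ball of a Sobolev space, and to extract the factor $\epsilon_{l-1}$ I would like to write $r_{l-1} = \epsilon_{l-1}\,\rho_{l-1}$ with $\rho_{l-1}$ lying, up to an absolute constant, in the unit ball of $W^{m,\infty}(I^{n+1})$ as a function of the augmented input. The difficulty is that, while $\mynorm{r_{l-1}} \le \epsilon_{l-1}$ controls the \emph{value} of the residual, its higher derivatives are not automatically of order $\epsilon_{l-1}$; a naive rescaling by $1/\epsilon_{l-1}$ blows the derivatives up and leaves the unit ball. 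Resolving this is the crux: I would either (i) strengthen the base step to a simultaneous-approximation statement, arranging that $g_{l-1}$ tracks $f$ together with its derivatives so the full Sobolev norm of $r_{l-1}$ is $O(\epsilon_{l-1})$, or (ii) exploit the extra coordinate $g_{l-1}(x)$ to represent the residual as $\epsilon_{l-1}\,\rho_{l-1}(x, g_{l-1}(x))$ with $\rho_{l-1}$ a fixed Sobolev function of $n+1$ variables whose norm does not grow with the layer index. Once such a representation is in hand, Theorem~\ref{th-shallow} in dimension $n+1$ yields $\mynorm{h_l - r_{l-1}} \le c\,\epsilon_{l-1} N^{-m/(n+1)}$, hence $\epsilon_l \le c\,N^{-m/(n+1)}\,\epsilon_{l-1}$, closing the induction.

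Finally I would collect constants: the base case contributes $cN^{-m/n}$ and each of the $l-1$ subsequent steps multiplies the error by $cN^{-m/(n+1)}$, so the leading constant telescopes to $c^l$ and the exponent to $m/n + (l-1)m/(n+1) = m(ln+1)/(n(n+1))$, matching the claim. Since everything outside the per-layer relative estimate is routine bookkeeping, I would invest essentially all of the effort in making the Sobolev-class control of the residual rigorous.
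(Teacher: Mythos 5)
Your proposal follows the paper's own proof step for step in structure: greedily fit the residual of the previous layers with a fresh shallow subnetwork on the $n+1$ inputs $(x, g_{l-1}(x))$, normalise the residual by the previous error bound so that Theorem~\ref{th-shallow} applies again, and telescope the per-layer factor $c\,N_l^{-m/(n+1)}$ onto the base-case factor $c\,N_l^{-m/n}$ to obtain the exponent $m(ln+1)/(n(n+1))$. The paper's proof is exactly this recursion, written out explicitly for the first two layers (with $f_1 = (f-g_1)/(c_1 N_l^{-m/n})$, then $f_2$, etc.) and closed by ``a simple inductive argument''. So in terms of route there is nothing different: same decomposition, same use of the $(n+1)$-dimensional instance of Theorem~\ref{th-shallow}, same bookkeeping of constants.

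The one place you diverge is the step you single out as the crux --- and there you are more careful than the paper. The paper asserts that ``$f_1$ is clearly in \mysob'' with $\mynorm{f_1}\le 1$ and offers no justification; your objection applies to it verbatim. Theorem~\ref{th-shallow} controls only the uniform norm of $f-g_1$, so after dividing by $\epsilon_1 = c_1 N_l^{-m/n}$ the partial derivatives of order up to $m$ are generically of size $\epsilon_1^{-1}$, and the rescaled residual leaves the unit ball of \mysob. Neither of your two proposed repairs is carried out in the paper, and neither is trivially available: simultaneous-approximation refinements of Theorem~\ref{th-shallow} typically give $\mynorm{\mathcal{D}^k(f-g_1)} \lesssim N^{-(m-|k|_1)/n}$, which after rescaling by $\epsilon_1^{-1}$ still leaves derivative norms of order $N^{|k|_1/n}$, so option (i) does not restore membership in the unit ball of the same space $\mysob$; option (ii) is at this point a hope rather than a construction. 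The verdict, then: you have reproduced the paper's argument and correctly isolated its unproved step, but the multiplicative recursion $\epsilon_l \le c\,N_l^{-m/(n+1)}\,\epsilon_{l-1}$ --- and with it the stated bound $c^l N_l^{-m(ln+1)/(n(n+1))}$ --- is established neither by your sketch nor by the paper's own proof; any complete proof must supply exactly the Sobolev-scale control of the residual that you flagged.
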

\begin{proof}
From Theorem \ref{th-shallow}, choose $g_1 \in S_{N,n}$, so that:
\begin{equation}
\mynorm{f-g_1} \le c_1 N_l^{-m/n}
\end{equation}
where $c_1$ is some constant. From the first layer, create a new function to approximate:
\begin{equation}
f_1 = \frac{1}{c_1 N_l^{-m/n}}(f-g_1)
\end{equation}
$f_1$ is clearly in $\mysob$ and note that $\mynorm{f_1} \le 1$, and therefore can be approximated with another single layer network ($g_2$), leading to the following error of approximation:
\begin{align}
\mynorm{f_1 - g_2} &\le c_2 N_l^{-m/(n+1)}\\
\mynorm{\frac{1}{c_1 N^{-m/n}}(f-g_1) - g_2}&\le c_2 N_l^{-m/(n+1)}\\
\frac{1}{c_1 N^{-m/n}} \mynorm{f - g_1 - c_1 N^{-m/n}g_2} &\le c_2N_l^{-m/(n+1)}\\
\mynorm{f - g_1 - c_1 N^{-m/n}g_2} &\le c_1 c_2 N_l^{-m(2n+1)/(n(n+1))}
\end{align}

Repeat the procedure by creating a new function to approximate:
\begin{equation}
f_2 = \frac{1}{c_1 c_2 N_l^{-m(2n+1)/(n(n+1))}} (f - g_1 - c_1 N_l^{-m/n)}g_2)
\end{equation}
Again $f_2 \in \mysob$ and $\mynorm{f_2} \le 1$. Approximate $f_2$ with a further layer ($g_3$):
\begin{align}
\mynorm{f_2 - g_3} &\le c_3 N_l^{-m/(n+1)} \\
\mynorm{\frac{1}{c_1 c_2 N_l^{-m(2n+1)/(n(n+1))}} (f - g_1 - c_1 N_l^{-m/n}g_2) - g_3}  &\le c_3 N_l^{-m/(n+1)}\\
\frac{1}{c_1 c_2 N_l^{-m(2n+1)/(n(n+1))}} \mynorm{f - g_1 - c_1 N_l^{-m/n}g_2 - c_1 c_2 N_l^{-m(2n+1)/(n(n+1))} g_3} &\le c_3 N_l^{-m/(n+1)} \\
\mynorm{f - g_1 - c_1 N_l^{-m/n}g_2 - c_1 c_2 N_l^{-m(2n+1)/(n(n+1))} g_3} &\le c_1 c_2 c_3 N^{-m(3n+1)/(n(n+1))}
\end{align}

A simple inductive argument completes the proof.
\end{proof}

Since $cN_l^{-m/n}<1$ (because $\mynorm{f}\le 1$, a constant function approximation would produce an error less than 1), it follows that the network will approximate the function exponentially fast in the number of layers.

\begin{myth}[Fully Connected Layer Network Approximation, $N_l\ge n$]
\label{thm-fclna-ge}
Let $\sigma: \myreal \mapsto \myreal$ be infinitely differentiable, and not a polynomial on any subinterval of $\myreal$. For $f \in \mysob$, and some constant $c$:
\begin{align}
\text{dist}(f, L_{N^{\ge},n,l}) &\le c^l N_l^{-m(ln+1)/(n(n+1))}
\end{align}
if each layer is an invertible map.
\end{myth}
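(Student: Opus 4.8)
The plan is to mirror the progressive-approximation argument of Theorem~\ref{thm-crna}, invoking the invertibility hypothesis precisely to recover, at each layer, the input information that the cascade network is handed for free. Recall that in that proof every layer $k\ge 2$ receives the $n$ input coordinates together with the previous approximation neuron $g_{k-1}$, so its shallow sub-network approximates the rescaled residual $f_{k-1}$ as a function of $n+1$ arguments and contributes the factor $N_l^{-m/(n+1)}$. The one structural difference in $L_{N^{\ge},n,l}$ is that a hidden layer sees only the output of its predecessor rather than the raw input; the whole task is therefore to show that, when each layer is invertible, no information about $x$ is lost and the same per-layer factor remains available.

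First I would write $\psi_k:I^n\to\myreal^{N_l}$ for the map computed by the first $k$ fully connected (non-approximation) layers. A composition of invertible maps is invertible, so $\psi_k$ is a bijection onto its image and $x=\psi_k^{-1}(\psi_k(x))$; hence any function of the raw input, in particular $f_{k-1}$, factors as $(f_{k-1}\circ\psi_k^{-1})\circ\psi_k$. Because $N_l\ge n$ and the approximation neuron is carried forward on its dedicated connection, layer $k+1$ has access both to a coordinatization equivalent to $x$ and to the scalar $g_k$, which is exactly the $(n+1)$-argument situation of the cascade proof. I would then define the rescaled residuals $f_1,f_2,\dots$ as in Theorem~\ref{thm-crna}, approximate each by a shallow sub-network acting on this invertible representation, and telescope; summing the per-layer exponents $m/n+(l-1)m/(n+1)=m(ln+1)/(n(n+1))$ and collecting the constants into $c^l$ reproduces the stated bound, with the base case supplied by the first layer, which sees $x$ directly and matches $c_1N_l^{-m/n}$.

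The hard part is the Sobolev bookkeeping concealed in the phrase ``factor $f_{k-1}$ through $\psi_k^{-1}$.'' Theorem~\ref{th-shallow} applies only to targets in \mysob, so I must verify that $f_{k-1}\circ\psi_k^{-1}$ lies in a bounded multiple of the Sobolev unit ball. By the chain rule its derivatives acquire powers of $D\psi_k^{-1}$, so set-theoretic invertibility alone is insufficient: what is genuinely required is that each layer be a diffeomorphism whose inverse Jacobian is bounded uniformly on $I^n$. I would read the invertibility hypothesis in this stronger smooth sense; the resulting Jacobian factors are finite and can be absorbed into the per-layer constants $c_k$, so their product still collapses to $c^l$. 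A secondary check is that $\psi_k(I^n)$ stays an $n$-dimensional image, so the effective approximation dimension is $n+1$ (the input manifold plus the carried approximation neuron) rather than the ambient $N_l$; this again follows from $\psi_k$ being an injective map out of $I^n$. With these points in place, an induction identical in form to the one that closes Theorem~\ref{thm-crna} completes the proof.
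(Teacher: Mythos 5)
Your proposal follows essentially the same route as the paper's own proof, which likewise reduces Theorem~\ref{thm-fclna-ge} to Theorem~\ref{thm-crna} by using invertibility of the composed layer maps to recover the raw input (the paper writes the input to $g_2$ as $(G_2' \circ G_1^{-1} \circ G_1)(X) = G_2'(X)$ and then declares the situation identical to the cascade case). In fact you go further than the paper by making explicit the Sobolev bookkeeping it glosses over --- that mere set-theoretic invertibility is insufficient and the layers must be diffeomorphisms with uniformly bounded inverse Jacobians so that $f_{k-1}\circ\psi_k^{-1}$ stays in a controlled multiple of the Sobolev ball --- which is a caveat the paper's one-line reduction silently assumes.
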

\begin{proof}
Define $G_1: \myreal^n \mapsto \myreal^{N_l}$ as the (invertible) mapping for layer 1, $G_{i>1}: \myreal^{N_l} \mapsto \myreal^{N_l}$ as the (invertible) mapping for layer i excluding the function approximation neuron $g_{i-1}$ (see Figure \ref{fig-fc-layernet}). The input to node $g_1$ is $G_1(X)$; to $g_2$ is $g_1, (G_2 \circ G_1)(X)$; etc. Consider the input to $g_2$. Since $G_1$ is an invertible map, if necessary, we could construct the input to $g_2$ as $(G_2' \circ G_1^{-1} \circ G_1)(X) = G_2'(X)$. Since this is identical to the 
situation in Theorem \ref{thm-crna}, the same result applies.
\end{proof}

The next theorem deals with the case where $N_l<n$. However, in this case a continuous invertible mapping is not possible. Instead we project coordinates into a lower dimensional space using a Hilbert curve mapping to maintain locality (nearby points in the lower dimensional space are nearby in the original space). Theoretically, we could do this with no loss of information, but unfortunately, this requires an infinite recursion, and therefore any computational procedure will induce an error in the new coordinates. Nevertheless, this error can be made small with a fixed cost projection. 

A Hilbert curve can be defined by the centre coordinates of a hierarchically divided hypercube. See Figure \ref{fig-hilbert} for a 2D example. The curve itself, up to level $k$, can be constructed by recursively subdividing an initial square and creating line segments between appropriate centre points. The Hilbert curve itself is the limiting curve as $k$ goes to infinity and defines a continuous, but non-differentiable, onto mapping from $n$ dimensions to 1 dimension. There are several ways to generate Hilbert curve mappings both from $n$ dimensions to 1, and from 1 to $n$ dimensions. See \citet{lawder2000calculation} for one efficient method.
For level $k$, the maximum difference between a point in $[0,1]^n$ and a point on the curve is $\frac{\sqrt{n}}{2^{k+1}}$. 

\begin{figure}[ht]
\centering
\includegraphics{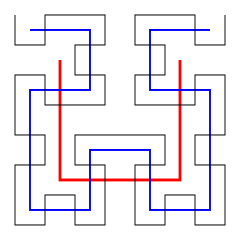}
\caption{The first 3 levels of a Hilbert curve. From \url{https://commons.wikimedia.org/wiki/File:Hilbert_curve_3.svg}, author Geoff Richards. \label{fig-hilbert}}
\end{figure}

\begin{myth}[Fully Connected Layer Network Approximation, $N_l < n$]
\label{thm-fclna-lt}
Let $\sigma: \myreal \mapsto \myreal$ be infinitely differentiable, and not a polynomial on any subinterval of $\myreal$. For $f$ Lipschitz with Lipschitz constant $L$, a Hilbert curve transform of level $k$, and some constant $c$:
\begin{align}
\text{dist}(f,L_{N^{<},n,l}) &\le \frac{L \sqrt{n-N_l+1}}{2^{k+1}} + c^l N_l^{-m(ln+1)/(n(n+1))}
\end{align}
\end{myth}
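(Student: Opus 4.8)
The plan is to reduce the $N_l < n$ case to the equal-width case already settled in Theorem~\ref{thm-fclna-ge} by collapsing just enough of the input coordinates with a Hilbert curve so that the effective input dimension matches the layer width $N_l$. I would write a point $x \in I^n$ as $x = (x', x'')$, with $x' \in \myreal^{N_l-1}$ holding the first $N_l-1$ coordinates and $x'' \in \myreal^{n-N_l+1}$ holding the remaining $n-N_l+1$ coordinates. Keeping $x'$ untouched, I apply a level-$k$ Hilbert curve map $H_k$ to $x''$ to produce a single scalar $h = H_k(x'')$. The pair $(x', h)$ then lives in an $N_l$-dimensional space, exactly matching the number of units per layer, which is precisely the regime in which the invertible-map argument of Theorem~\ref{thm-fclna-ge} is available.

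The first error term comes entirely from the lossiness of this projection. By the stated locality property, a level-$k$ curve in $n-N_l+1$ dimensions reconstructs $x''$ from $h$ up to a point $\hat{x}''$ with $\|x'' - \hat{x}''\| \le \sqrt{n-N_l+1}/2^{k+1}$. Since $f$ is Lipschitz with constant $L$, replacing $x''$ by its curve reconstruction changes the target value by at most $L\sqrt{n-N_l+1}/2^{k+1}$, which is the first summand. This is the only place the Lipschitz hypothesis enters, and it is what lets us trade the non-invertibility of a dimension-reducing map for a controllable, fixed-cost error.

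For the second term I would treat $\bar f(x', h) := f(x', \hat{x}''(h))$ as a target on the $N_l$-dimensional reduced cube and feed the network the reduced coordinates $(x', h)$. Because the reduced input dimension now equals the layer width $N_l$, Theorem~\ref{thm-fclna-ge} applies and yields a bound of the form $c^l N_l^{-m(lN_l+1)/(N_l(N_l+1))}$. Since $N_l < n$ forces the reduced exponent $m(lN_l+1)/(N_l(N_l+1))$ to exceed $m(ln+1)/(n(n+1))$, and $N_l \ge 1$, this is at most $c^l N_l^{-m(ln+1)/(n(n+1))}$, the claimed second summand. A triangle inequality combining the Hilbert reconstruction error with this network-approximation error then completes the bound.

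The main obstacle I anticipate is smoothness: Theorem~\ref{th-shallow}, and hence Theorem~\ref{thm-fclna-ge}, requires the reduced target $\bar f$ to lie in \mysob, but the Hilbert parametrization $h \mapsto \hat{x}''(h)$ is only Lipschitz, not differentiable, so the composition $\bar f$ need not inherit the high-order smoothness the second bound presumes. Reconciling this — by smoothing $\bar f$ before invoking the shallow bound, by assuming $\bar f \in \mysob$ as an extra hypothesis, or by exploiting that a finite level-$k$ curve is piecewise affine and treating each piece separately — is the delicate step; the coordinate splitting and the Lipschitz estimate for the first term are routine by comparison. A secondary point worth making rigorous is that the first layer can actually realize an approximation of the fixed level-$k$ Hilbert map with the smooth activations at a cost that does not disturb the per-layer budget.
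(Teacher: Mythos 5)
Your proposal follows essentially the same route as the paper's own proof: split off the last $n-N_l+1$ coordinates, collapse them to one coordinate with a level-$k$ Hilbert map at Lipschitz cost $L\sqrt{n-N_l+1}/2^{k+1}$, apply Theorem~\ref{thm-fclna-ge} to the reduced $N_l$-dimensional problem, and combine the two errors by the triangle inequality. The smoothness obstacle you flag --- that the reduced target need not lie in \mysob{} because the Hilbert parametrization is non-differentiable --- is genuine, but the paper's one-paragraph proof silently glosses over it (as it does over the exponent comparison you justify explicitly), so your write-up is if anything more careful than the source.
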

\begin{proof}
The proof is straightforward. For the coordinate projection from $n$ dimensions to $N_l$ dimensions, choose the first $n-N_l+1$ dimensions and apply a Hilbert curve transformation. This induces a coordinate error less than $\frac{\sqrt{n-N_l+1}}{2^{k+1}}$ and subsequently a function approximation error less than $L\frac{\sqrt{n-N_l+1}}{2^{k+1}}$. We then apply Theorem \ref{thm-fclna-ge} on the remaining $N_l$ coordinates along with the triangle inequality to prove the result.
\end{proof}

\section{Discussion}

The constants, $c$, in the theorems pose some difficulty since the errors are exponential in the number of layers ($c^l$). It appears to be possible to estimate the size of these constants \citep{dupont1978constructive,dupont1980polynomial}, however the process is not straightforward and we have not attempted to estimate them. Nevertheless these theoretical results provide hints that for more general functions deep networks are never much worse than shallow networks. Given previous results showing that deep networks can be much better than shallow for specific function classes, it follows that there is little to lose in always choosing deep architectures (modulo the difficulties in learning deep networks).

These theoretical results also point toward layer-wise learning algorithms that reduce error exponentially fast in a manner that is somewhat analagous to AdaBoost like algorithms. We are currently investigating the practical implications of such algorithms.

For layer-wise learning of fully connected networks, Theorem \ref{thm-fclna-ge}
 suggests that requiring invertible maps might be important. This might explain some of the success of autoencoders. Although Theorem \ref{thm-fclna-lt} suggests that non-invertible maps might be able to achieve similar results, via space-filling curve mappings, it remains to be seen if such a scheme would be practical.

More recently, ResNets \citep{he2016deep,szegedy2017inception} of various types have been shown to outperform non-ResNet architectures with the most common argument given for their success being that it is easier for the gradients to propagate back to the inputs during learning. Theorem 2 and 3 together suggest a second reason may be that it is also easier to approximate residual functions if layers are skipped as there is no requirement that the layer mapping be invertible.

\bibliography{../../../Biblio/papers}

\end{document}